\setlist{nolistsep, noitemsep,leftmargin=12pt, labelwidth=1pt, topsep=0pt, partopsep=0pt}
\DeclareMathAlphabet{\pazocal}{OMS}{zplm}{m}{n}
\newcommand{\unif}{\pazocal{U}}
\newcommand{\settitle}{\@maketitle}
\DeclarePairedDelimiterX{\dotp}[2]{\langle}{\rangle}{#1, #2}
\newcommand*\bigcdot{\mathpalette\bigcdot@{.5}}
\newcommand*\bigcdot@[2]{\mathbin{\vcenter{\hbox{\scalebox{#2}{$\m@th#1\bullet$}}}}}
\newcommand{\be}{\begin{equation}}
\newcommand{\ee}{\end{equation}}
\newcommand{\br}[1]{\left\{#1\right\}}
\newcommand{\abs}[1]        {| #1 |}
\newcommand{\eps}{\ensuremath{\varepsilon}}                       % Epsilon
\renewcommand{\epsilon}{\varepsilon}
\declaretheorem[name=Theorem]{theorem}
\declaretheorem[name=Definition]{definition}
\newcommand{\REAL}{\ensuremath{\mathbb{R}}}
\let\Pr\relax
\DeclareMathOperator*{\Pr}{\mathbf{Pr}}
 \newtheorem{claim}[theorem]{Claim}
\newcommand{\csize}{C_{size}}
\newcommand{\epochs}{epochs}
\DeclareMathOperator*{\argmina}{arg\,min}
\begin{document}

\title{A Unified Approach to Coreset Learning}

\author{Alaa Maalouf$^{*}$,
        Gilad Eini$^{*}$,
        Ben Mussay,
        Dan Feldman,
        and~Margarita Osadchy% <-this % stops a space
\IEEEcompsocitemizethanks{\IEEEcompsocthanksitem A. Maalouf and G. Eini and B. Mussay and D. Feldman and M. Osadchy are with the Department of Computer Science, University of Haifa, Israel.
\IEEEcompsocthanksitem Corresponding author: A. Maalouf. E-mail: alaamalouf12@gmail.com.

\IEEEcompsocthanksitem $^{*}$ Equal contribution.}% <-this % stops an unwanted space
% \thanks{Manuscript received April 19, 2005; revised August 26, 2015.}
}

% The paper headers
\markboth{Unified Approach for Coreset Learning
}%
{Shell \MakeLowercase{\textit{et al.}}: Bare Demo of IEEEtran.cls for Computer Society Journals}

\IEEEtitleabstractindextext{%
\begin{abstract}
Coreset of a given  dataset and loss function is usually a small weighed set that approximates this loss for every query from a given set of queries.   Coresets have shown to be very useful in many applications. However, coresets construction is done in a problem dependent manner and it could take years to design and prove the correctness of a coreset for a specific family of queries. This could limit coresets use in practical applications. Moreover, small coresets provably do not exist for many problems. 

To address these limitations, we propose a generic, learning-based algorithm for construction of coresets. %The algorithm is tailored to the specific input at hand. 
Our approach offers a new definition of coreset, which is a natural relaxation of the standard definition and aims at approximating the \emph{average} loss of the original data over the queries. This allows us to use a learning paradigm to compute a small coreset of a given set of inputs with respect to a given loss function using a training set of queries. We derive formal guarantees for the proposed approach.  Experimental evaluation on deep networks and classic machine learning problems show that our learned coresets yield  comparable or even better results than the existing algorithms with worst-case theoretical  guarantees (that may be too pessimistic in practice). Furthermore, our approach applied to deep network pruning provides the first coreset for a full deep network, i.e., compresses all the network at once, and not layer by layer or similar divide-and-conquer methods.
%Full open-source code is also provided.
\end{abstract}

% Note that keywords are not normally used for peerreview papers.
\begin{IEEEkeywords}
Data summarization, Coresets, Learning, Generalization.
\end{IEEEkeywords}}

% make the title area
\maketitle

% To allow for easy dual compilation without having to reenter the
% abstract/keywords data, the \IEEEtitleabstractindextext text will
% not be used in maketitle, but will appear (i.e., to be "transported")
% here as \IEEEdisplaynontitleabstractindextext when the compsoc 
% or transmag modes are not selected <OR> if conference mode is selected 
% - because all conference papers position the abstract like regular
% papers do.
\IEEEdisplaynontitleabstractindextext
% \IEEEdisplaynontitleabstractindextext has no effect when using
% compsoc or transmag under a non-conference mode.

% For peer review papers, you can put extra information on the cover
% page as needed:
% \ifCLASSOPTIONpeerreview
% \begin{center} \bfseries EDICS Category: 3-BBND \end{center}
% \fi
%
% For peerreview papers, this IEEEtran command inserts a page break and
% creates the second title. It will be ignored for other modes.
\IEEEpeerreviewmaketitle

\IEEEraisesectionheading{\section{Introduction}\label{sec:introduction}}

\IEEEPARstart{C}{oreset} is usually defined as a small weighted subset of the original input set that provably approximates the given loss (objective) function for every query in a given set of queries.
Coresets are useful in machine learning applications as they offer significant efficiency improvements. Namely, traditional (possibly inefficient, but provably optimal) algorithms can be applied on coresets to obtain an approximation of the optimal solution on the full dataset using time and memory that are smaller by order of magnitudes. Moreover, existing heuristics that already run fast can be improved in terms of accuracy by running them many times on the coreset in the time it takes for a single run on the original (big) dataset. Finally, coresets can be maintained for a distributed \& streaming data, where the stream is distributed in parallel from a server to $m$ machines (e.g. cloud), and the goal is to maintain the optimal solution (or an approximation to it) for the whole input seen so far in the stream using small update time, memory, and communication to the server.

In the recent decade, coresets, under different formal definitions, were applied to many machine learning algorithms
e.g. logistic regression~\cite{huggins2016coresets,munteanu2018coresets}, SVM~\cite{har2007maximum,tsang2006generalized,tsang2005core,tsang2005very,tukan2020coresets}, clustering problems~\cite{ feldman2011scalable,gu2012coreset,jubran2020sets,lucic2015strong, schmidt2019fair}, matrix approximation~\cite{feldman2013turning, maalouf2019fast,maalouf2020tight, sarlos2006improved,maalouf2021coresets}, $\ell_z$-regression~\cite{cohen2015lp, dasgupta2009sampling, sohler2011subspace}, decision trees~\cite{jubran2021coresets}, and others; see surveys~\cite{feldman2020core,phillips2016coresets,jubran2019introduction}.

Some attempts of using coresets were recently suggested in application to deep networks.
Apart from the standard  use of coresets for reducing  the amount of computations in training, e.g., by replacing full data~\cite{mirzasoleiman-icml-2020} or a batch~\cite{batchSizeReduction} with a coreset, there
are other applications that motivate the use of summarization methods in deep networks, e.g., model compression, continual learning, domain adaptation, federated learning, neural architecture search. We discuss some of the them below.

\noindent\textbf{Model Compression.} Deep networks are highly over-parametrized,  resulting in high memory requirements and slower inference. While many methods have been developed for reducing the size of a previously trained network with no (or small) accuracy loss~\cite{liu2019metapruning,li2019learning,chen2020storage,he2019filter,dong2017more,kang2020operation,ye2020good, ye2018rethinking,maalouf2021deep}, most of them relied on heuristics, which performed well on known benchmarks, but diverged considerably from the behavior of the original network on specific sub-sets of input distribution~\cite{brain_workshop}. Few previous works~\cite{MussayOBZF20,Mussai20a,baykal2018datadependent,Liebenwein2020Provable} tried to resolve this problem by deriving a coreset for a fully connected or a convolutional layer with provable trade-off between the compression rate and the approximation error for any future input. 
%TODO: Others suggested to approximate the corresponding weight matrix via different low-rank approximation techniques, e.g.,~\cite{denton2014exploiting,s21165599}
However, since these works construct a coreset for a layer, the full network compression is performed in a layer-by-layer fashion. 

\noindent\textbf{Limited Data Access.} Problems, such as continual / incremental learning~\cite{LwF,iCarl,Few_shot_reminder,Prototype_reminder,Lopez-PazR17,BorsosM020}, domain adaptation~\cite{LiSWZG17,Asami}, federated learning~\cite{goetz2020federated} do not have access to the full data (due to memory limitation or privacy issues) and only a small summary of it can be used. Coresets offer a natural solution for these problems. 

\noindent\textbf{NAS.} Another important application that could benefit from coresets is neural architecture search (NAS). Evaluating different architectures or a choice of parameters using a large data set is extremely time consuming. A representative, small summary of the training set could be used for a reliable approximation of the full training, while greatly speeding up the search. Few recent works~\cite{Condensation, GTN} (inspired by the work of~\cite{wang2018dataset}) tried to learn a small synthetic set that summarizes the full set for NAS.

Previous attempts of summarizing a full training set with a small subset or a synthetic set showed a merit of using coresets in modern AI (e.g., for training deep network). However, the summarization methods that they suggested were based on heuristics with no guarantees on the approximation error. Hence, it is not clear that existing heuristics for data summarization %, which are not based on current coreset frameworks 
could scale up to real-life problems.
On the other hand, theoretical coresets that provably quantify the trade-off between data reduction and information loss for an objective function of interest, are mostly limited to simple, shallow models due to the challenges discussed below in Section~\ref{subsection:challanges}. 
From the theoretical perspective, it seems that we cannot have coresets for a reasonable neural network under the classic definition of the worst case query (e.g. see Theorem 6~\cite{MussayOBZF20}). 
In this paper we try to find a midway between these two paradigms.

\subsection{Coreset challenges}\label{subsection:challanges} In many modern machine learning problems, obtaining non-trivial theoretical worst-case guarantees is usually impossible (due to a high complexity of the target model, e.g. deep networks or since every point in the input set is important in the sense of high sensitivity~\cite{tukan2020coresets2}). 
%computing a coreset becomes a very hard task, and the theoretical analysis sometimes fails to suggest such approximations.
Even for the simple problems, it may take years to design a coreset and prove its correctness for a specific problem at hand.

Another problem with the existing theoretical frameworks is the lack of generality. Even the most generic frameworks among them~\cite{feldman2011unified, langberg2010universal} replace the problem of computing a coreset for $n$ points with $n$ new optimization problems (known as sensitivity bound), one for each of the $n$ input points. Solving these, however, might be harder than solving the original problem. Hence, different approximation techniques are usually tailored for each and every problem.

\subsection{Our Contribution}
The above observations suggest that there is a need in a more generic approach that can 
compute a coreset automatically for a given pair of dataset and loss function, and can be applied to hard problems, such as deep networks. It seems that this would require some relaxation in the standard coreset definition. Would such a relaxed coreset produced by a generic algorithm yield comparable empirical results with the traditional coresets that have provable guarantees?
We affirmably answer this question by providing:
\begin{enumerate}[(i)]
    \item A new definition of a coreset, which is a relaxation of the traditional definition of the strong coreset.
    \item AutoCL: a generic and simple algorithm that is designed to compute a coreset (under the new definition) for almost any given input dataset and loss function.
    %TODO: replace ``coreset'' with ``a relaxed (simpler) version  of coreset''
    \item Example applications with highly competitive empirical results for: (a) problems with known coreset construction algorithms, namely, linear regression and logistic regression, where the goal is to summarize the input training set, and (b) model compression, i.e., learning a coreset %for an entire deep neural network at once, which to our knowledge is the first full-network coreset, here we compute a coreset for the network itself.
    %\item Extensive experimental results....
    of all training parameters of a deep neural network at once (useful for model pruning). To our knowledge, this is the first algorithm that aims to compute a coreset for the network at once, and not layer by layer or similar divide-and-conquer methods. It is also the first approach that suggests to represent the coreset itself as a small (trainable) network that keeps improving on each iteration. In this sense we suggest "coreset for deep learning using deep learning".
    \item Open code for reproducing our results~\cite{opencode}. We expect that it would be the baseline for producing ``empirical" coresets for many problems in the future. Mainly, since it requires very little familiarity with the existing theoretical  research on coresets.
   
\end{enumerate}

%%%%%%%%%%%%%%%%%%%%%%%%%%%%%%%%%%%%%%%%%%%%%%%%%%%%%%%%%%%%%%%%%%%%%%%%%%%%%%%%%%%%%

\section{Preliminaries}

\textbf{Notations.} %For a pair of integers $d,n\geq 1$, we denote by $0_d$ the origin of $\REAL^d$, and $[n]=\br{1,\cdots,n}$. 
For a set $P$ of $n$ items, we use $\abs{P}$ to denote the number of items in $P$ (i.e., $\abs{P}=n$). For an event $B$ we use $\Pr(B)$ as the probability that event $B$ occurs, and for a random variable $x$ with a probability measure $\mu$, we use $\mathbb{E}_{\mu}(x)$ to denote its mean (expected value). Finally, for a loss function $\mathrm{loss}$ and an input set of variables $C$ (from any form), we use $\nabla \mathrm{loss}(C)$ to denote a standard gradient computation of $\mathrm{loss}$ with respect to the set of variables $C$, and $C- \alpha\nabla \mathrm{loss}(C)$ to denote a standard variables update ($C$) using a gradient step, where $\alpha>0$ is the learning rate.

The following (generic) definition of a query space encapsulates all the ingredients required to formally define an optimization problem.

\begin{definition} [Query space; see Definition 4.2 in~\cite{braverman2016new}] \label{def:querySpace}
Let $\mathbb{P}$ be a (possibly infinite) set called \emph{ground set}, $Q'$ be a (possibly infinite) set called \emph{query set},  and let $f:\mathbb{P}\times Q' \to [0,\infty)$ be a loss (or cost) function. Let $P\subseteq \mathbb{P}$ be a finite set called \emph{input set}, and let $w:P\to [0,\infty)$ be a \emph{weight function}.
The tuple $(P,w,Q',f)$ is called a \emph{query space} over $\mathbb{P}$.
\end{definition}

%\vspace{-2ex}
Typically, in the training step (of machine learning model), we solve the optimization problem, i.e., we aim at finding the solution $q^*$ that minimizes the sum of fitting errors $\sum_{p\in P} w(p) f(p,q)$ over every $q\in Q'$. 

\begin{definition} [Query cost] \label{def:onequerycost}
Let $(P,w,Q',f)$ be a query space over $\mathbb{P}$. Then, for a query $q\in Q'$ we define the total cost of $q$ as
$f(P,w,q) = \sum_{p\in P} w(p)f(p,q).$
\end{definition}

In the next definition, we describe formally a (strong) coreset for a given optimization problem.
\begin{definition}[Traditional Coresets]\label{def:strongCoreset}
For a query space $(P,w,Q',f)$, and an error parameter $\eps\in (0,1)$, an $\eps$-coreset is a pair $(C,u)$ such that $C\subseteq P$, $u:C\to \REAL$ is a weight function, and for every $q\in Q'$, $f(C,u,q)$ is a $1\pm\eps$ multiplicative approximation for  $f(P,w,q)$, i.e., 
\begin{equation}
\abs{f(P,w,q) - f(C,u,q)} \leq \eps f(P,w,q). \label{eq:traditional-coreset}
\end{equation}
\end{definition}

\section{Method}
In this section we first explain our approach in general, emphasising its novelty and then, we present our suggested framework including all the details.
\subsection{Novel Framework}
We propose a \emph{practical} and \emph{generic} framework for coreset construction to a wide family of problems via the following steps:
%\textbf{Framework:}
\begin{enumerate}[(a)] 
  \item \emph{Make a problem simpler by relaxing the definition of a coreset.}\label{framework:1} Namely, we propose a new $(\eps,\mu)$-coreset for the Average Loss  (in  Definition~\ref{def:aca}) that is a relaxation of the standard definition (in Definition~\ref{def:strongCoreset}), and is more suited for the learning formalism.
  %TODO: See Definition~\ref{def:aca}.
  \item \emph{Define coreset construction as a learning problem}.  Here, the coreset (under the new definition in Definition~\ref{def:aca}) is the training variable. \label{framework:2}
  %TODO: See Lines~\ref{line:initc-new},~\ref{line:initu-new},~\ref{line:updatec-new} and~\ref{line:updateu-new} of Algorithm~\ref{alg:main-new}. 
  \item \emph{Find the coreset that optimizes the empirical risk over a training set of queries.} \label{framework:3} We assume that we are given a set of queries, chosen i.i.d. from an unknown distribution and we find a coreset that approximates the average loss of the original input data over the training set of queries. 
  %TODO: See Section~\ref{sec:coreset-learning}.
  \item \emph{Show that the optimized coreset generalizes to all members in the query set.} \label{framework:4} Namely, the expected loss on the coreset over all queries approximates the expected loss on the original input data.
  %TODO: See Claim~\ref{main:theroem}.
\end{enumerate}

\subsection {$(\eps,\mu)$-Coreset for the Average Loss}
We relax the definition of a coreset by observing that in data mining and machine learning problems we are usually interested in approximating the average loss over the whole set of queries rather than approximating the loss of a specific query. To this end, we define a distribution over the set of queries in Definition~\ref{def:prob-querySpace}, and then focus on approximating the expected loss in Definition~\ref{def:aca}.

\begin{definition} [Measurable query space] \label{def:prob-querySpace}
Let $(P,w,Q',f)$ be a query space over the ground set $\mathbb{P}$, and let $\mu$ be a probability measure on a Probability space
 $(Q',2^{Q'})$. Then, the tuple $(P,w,Q',f,\mu)$ is called a measurable query space over $\mathbb{P}$. 
\end{definition}

\begin{definition} [$(\eps,\mu)$-coreset for the Average Loss]\label{def:aca}
Let $(P,w ,Q',f,\mu)$ be a measurable query space over $\mathbb{P}$. Let $\eps \in [0,\infty)$ be an error parameter, $C\subset \mathbb{P}$ be a set, and $u:C\to\REAL$ be a weight function such that: 

     $$\abs{\mathbb{E}_\mu(f(P,w,q))  -  \mathbb{E}_\mu (f(C,u,q))} \leq\eps,$$ i.e., the expected loss of the original set $P$ over the randomness of sampling a query $q$ from the distribution $\mu$ is approximated by the expected loss on $C$.
   
Then, the pair $(C,u)$ is called an $(\eps,\mu)$-coreset for the measurable query space $(P,w,Q',f,\mu)$.
\end{definition}

While, $(P,w)$ is also an $(\eps,\mu)$-coreset of $(P,w,Q',f,\mu)$, coreset $(C,u)$ is efficient if the cardinality of $C$ is significantly smaller than $P$, i.e., $|C|\ll|P|$, hopefully by order of magnitude.

\textbf{Remark:} Throughout the literature, the term ``coreset'' usually refers to a small weighted \textbf{subset} of the input set (data). However, in other works (and in ours), this requirement is relaxed~\cite{cohen2015dimensionality,phillips2016coresets}. In many applications this relaxation gives a significant benefit as it supports a much larger family of instances as coreset candidates. 
%Finally, for simplicity of computations, we assume that $\sum_{p\in P}w(p)=1$.

\subsection{Coreset Learning}\label{sec:coreset-learning}

\newcommand{\computeaca}{\textsc{AutoCL}}
\begin{algorithm}[b]
\small
\caption{$\computeaca(P,w,Q,f,\csize)$}
\label{alg:main-new}
\textbf{Input:} {A finite input set $P$, and its weight function $w:P\to \REAL$, a finite set of queries $Q$, a loss function $f:P
\times Q\to [0,\infty)$, and an integer $\csize\geq1$.}
\begin{spacing}{1.1}
\begin{algorithmic}[1]
\small
\STATE $C:=\br{c_i}_{i=1}^{\csize}$ is an arbitrary set of $\csize$  vectors in $\mathbb{P}$.\label{line:initc-new}\\ 
\STATE $u(c):=1/\csize$ for every $c\in C$.\label{line:initu-new}\\
\FOR{$i:= 1 \to \epochs$} \label{lin:opt_start}
   % \COMMENT{the integer $\epochs>0$ is the number of iterations.}
    \STATE  $f_C:= \frac{1}{k}\sum_{q\in Q}f(C,u,q)$ \label{line:cerror-new} 
    \COMMENT{The average loss on $C$.} \\ 
    \STATE  $f_P:= \frac{1}{k}\sum_{q\in Q}f(P,w,q)$\label{line:perror-new} \COMMENT{The average loss on $P$.}
    \STATE  $\mathrm{loss}:= \abs{f_P - f_C} + \lambda \abs{\sum_{p\in P}w(p) - \sum_{p\in C}u(p)}$\label{line:apperror-new} \\
    \COMMENT{The approximation error that we wish to minimize, $\lambda>0$ is a hyper-parameter to balance the two losses.}
    \STATE  $C:= C - \alpha \nabla \mathrm{loss}(C)$\label{line:updatec-new} \\ \COMMENT{Update $C$, where $\alpha>0$ is the learning rate.}\\
    \STATE  $u:= \max\{0,u - \alpha \nabla loss(u)\}$ \label{line:updateu-new}\COMMENT{Update $u$.} 
\ENDFOR \label{lin:opt_end}
\STATE \textbf{return} $(C,u)$
\end{algorithmic}
\end{spacing}
\end{algorithm}

%\textbf{}
We propose to learn a coreset (and its weights) as in Definition~\ref{def:aca} using gradient-based methods. 
We assume that we are given a set $P$, its weights $w$ such that $\sum_{p\in P} w(p) =1$,\footnote{We use this assumption for simplicity of the writing. Practically, we can implement it by scaling the input weights to sum to $1$, and formally, all is needed is scaling the sample size of the queries according to the sum of weights.} and a set $Q$ of $|Q|=k$ queries sampled i.i.d. from $Q'$ (according to the measure $\mu$). First, we aim to compute an $(\eps,\mu)$-coreset $(C,u)$ of $(P,w)$ with respect to the finite set of queries $Q$. Formally speaking, $(C,u)$ should satisfy:
\begin{align}
\left|\sum_{q\in Q}\frac{1}{k}f(P,w,q)-\sum_{q\in Q}\frac{1}{k}f(C,u,q)\right| \leq \eps.  \label{ALG:guarantee}  
\end{align}
 
To do so, we can treat $Q$ as our training data and learn coreset  $(C,u)$ of $(P,w)$ with respect to the objective $f$ by minimizing the following loss: 
$$\left|{\frac{1}{k}\sum_{q\in Q}f(P,w,q)-\frac{1}{k}\sum_{q\in Q}f(C,u,q)}\right|.$$

This will guarantee that $(C,u)$ is an $(\eps,\mu)$-coreset for the measurable query space $(P,w,Q,f,\unif)$, where $\unif:Q\to [0,1]$ is the uniform distribution over the finite set $Q$, i.e., $\unif(q)=1/k=1/|Q|$ for every $q\in Q$. 

However, we wish that the constraint in Eq.~\eqref{ALG:guarantee} would hold for the whole set of queries $Q'$ in order to obtain an $(\eps,\mu)$-coreset for our desired (original) measurable query space $(P,w,Q',f,\mu)$. 
To obtain a generalized solution (as we show in 
Section~\ref{sec:generalization}), we need to bound $\sup_{q\in Q'}{f(C,u,q)}$. To do so, we should guarantee that the sum of coreset weights approximates the original sum of weights, i.e:
\begin{align}
\abs{\sum_{p\in P}w(p) - \sum_{p\in C}u(p)} \leq \eps. \label{ALG:guarantee2}  
\end{align}
The motivation behind bounding Eq~\eqref{ALG:guarantee2} is as follows. Recall that $\mathbb{P}$ is the ground set, i.e., $P,C \subset \mathbb{P}$. Let $M=\sup_{q\in Q',p\in \mathbb{P}}\abs{f(p,q)}$, so that enforcing Eq.~\eqref{ALG:guarantee2}, yields for every $q\in Q'$ 
$$f(C,u,q) \leq \sum_{p\in C}u(p)f(p,q) \leq (\sum_{p \in P}w(p)+\eps)M= (1+\eps)M.$$ 
Hence, we ``force'' our coreset to have a bounded loss over the whole query space $\sup_{q\in Q'}f(C,u,q) \leq (1+\eps)M$, furthermore, this bound is proportional to the bound of the loss on the original input $P$, i.e, it is proportional to $$\sup_{q\in Q'}f(P,w,q)\leq \sum_{p\in P} w(p) M =M,$$ and the approximation error $\eps$.

To summarize, we learn an $(\eps,\mu)$-coreset $(C,u)$ of $(P,w)$ with respect to the objective $f$ given a training data (set of queries) $Q$. To enforce the conditions in Eqs.~\eqref{ALG:guarantee} and~\eqref{ALG:guarantee2} to hold with small $\eps$, we minimize the following loss:  
\begin{equation}
\begin{split}
\mathrm{loss}(Q;C,u)&:=\left|\frac{1}{k}\sum_{q\in Q}f(P,w,q)-\frac{1}{k}\sum_{q\in Q}f(C,u,q)\right| \\ 
&\quad + \lambda \left|{\sum_{p\in P}w(p) - \sum_{p\in C}u(p)}\right|. \label{alg:loss}
\end{split}
\end{equation}
%$$loss(q;C,u)=\abs{1-\frac{f(C,u,q)}{f(P,w,q)}},$$
Here, $\lambda>0$ is a hyper-parameter to balance the two losses.
The algorithm for coreset learning is summarised in Algorithm~\ref{alg:main-new}.

\subsection{Generalization} \label{sec:generalization}

We start by stating the sufficient guarantees for the $(\eps,\mu)$-coreset (i.e., the sufficient guarantees to obtain a generalized solution): 
\begin{enumerate}[(i)]
    \item \label{firststep} With high probability, the expected loss on the set $P$ over all queries in $Q'$ (i.e., $\mathbb{E}_\mu(f(P,w,q))$) is approximated by the average loss on the same set $P$ over the sampled set $Q$ of $k$ queries, i.e., with high probability
    $$\abs{\frac{1}{k}\sum_{q\in Q} f(P,w,q) - \mathbb{E}_\mu (f(P,w,q))}\leq \eps.$$
    \item  \label{secondstep}  The same should hold for $(C,u)$,  i.e., with high probability
    $$\abs{\frac{1}{k}\sum_{q\in Q} f(C,u,q) - \mathbb{E}_\mu (f(C,u,q))}\leq \eps.$$
\end{enumerate}
Then, by Eq.~\eqref{ALG:guarantee}, we have that $\frac{1}{k}\sum_{q\in Q} f(C,u,q)$ approximates $\frac{1}{k}\sum_{q\in Q} f(P,w,q)$, hence combining~\ref{firststep} and~\ref{secondstep} with Eq.~\eqref{ALG:guarantee}, yields that  $\mathbb{E}_\mu (f(C,u,q))$ approximates $\mathbb{E}_\mu (f(P,w,q))$.

To show that~\ref{firststep} holds, we rely on Hoeffding's inequality as follows.

\begin{claim}[Mean of Losses] \label{hofff}%in the Unit Ball via Hoeffding's Inequality.
Let $(P,w,Q',f,\mu)$ be a measurable query space such that $\sum_{p\in P}w(p)=1$, and let $M=\sup_{q\in Q'}\abs{f(P,w,q)}$. 
Let $\eps \in (0,\infty)$ be an approximation error, and let $\delta \in (0,1)$ be a probability of failure.
Let $Q$ be a sample of $k\geq \frac{2M^2\ln(2/\delta)}{\eps^2} $ queries from $Q'$, chosen i.i.d, where each $q\in Q'$ is sampled with probability $\mu(q)$. Then, with probability at least $1-\delta$, 
$$\abs{\frac{1}{k}\sum_{q\in Q} f(P,w,q) - \mathbb{E}_\mu (f(P,w,q))} \leq \eps .$$ %\mathbb{E}_\mu (f(P,w,q)).$$
\end{claim}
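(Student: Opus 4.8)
The plan is to recognize the left-hand side as the deviation of an empirical average of bounded i.i.d.\ random variables from its mean, and then invoke Hoeffding's inequality directly.

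First I would set up the random variables. Write $Q=\{q_1,\dots,q_k\}$, where $q_1,\dots,q_k$ are drawn i.i.d.\ from $Q'$ with each $q_i$ sampled with probability $\mu(q_i)$, and define $X_i:=f(P,w,q_i)$ for $i=1,\dots,k$. Since the $q_i$ are i.i.d.\ and $q\mapsto f(P,w,q)$ is a fixed measurable function, the $X_i$ are i.i.d. Moreover, because $f$ takes values in $[0,\infty)$ and the weights are nonnegative, $f(P,w,q)=\sum_{p\in P}w(p)f(p,q)\geq 0$, while by the definition of $M$ we have $f(P,w,q)\leq M$ for every $q\in Q'$; hence $X_i\in[0,M]$. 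Also $\mathbb{E}(X_i)=\mathbb{E}_\mu(f(P,w,q))$ for every $i$, so $\mathbb{E}\big(\frac{1}{k}\sum_{i=1}^{k}X_i\big)=\mathbb{E}_\mu(f(P,w,q))$, which identifies the quantity to be controlled as the deviation $\abs{\frac{1}{k}\sum_{i=1}^{k} X_i-\mathbb{E}(\frac{1}{k}\sum_{i=1}^{k} X_i)}$.

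Next I would apply Hoeffding's inequality to the bounded i.i.d.\ variables $X_1,\dots,X_k$, working for concreteness with the centered variables $X_i-\mathbb{E}(X_i)\in[-M,M]$. This yields, for every $t>0$,
$$\Pr\left(\abs{\frac{1}{k}\sum_{i=1}^{k}X_i-\mathbb{E}_\mu(f(P,w,q))}\geq t\right)\leq 2\exp\left(-\frac{2k^2t^2}{k(2M)^2}\right)=2\exp\left(-\frac{kt^2}{2M^2}\right).$$
Setting $t=\eps$ and requiring the right-hand side to be at most $\delta$ gives exactly $k\geq\frac{2M^2\ln(2/\delta)}{\eps^2}$, so under the hypothesis of the claim this probability is at most $\delta$, and taking complements gives the stated bound.

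I do not expect a genuine obstacle here: the only points needing care are (a) verifying $X_i\in[0,M]$, which uses the nonnegativity of $f$ together with the definition of $M$; (b) the implicit assumption that $M<\infty$, without which the sample-size requirement cannot be met and there is nothing to prove; and (c) the measurability of $q\mapsto f(P,w,q)$, which is built into the notion of a measurable query space. I note also that applying the sharper form of Hoeffding using the length-$M$ interval $[0,M]$ directly would give the smaller sufficient size $k\geq\frac{M^2\ln(2/\delta)}{2\eps^2}$; the looser bound stated in the claim is of course still valid, since enlarging $k$ only strengthens the concentration.
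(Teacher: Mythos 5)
Your proof is correct and follows essentially the same route as the paper: identify $X_i=f(P,w,q_i)$ as i.i.d.\ variables bounded in an interval of length $2M$, apply Hoeffding's inequality, and plug in the stated sample size to get failure probability $\delta$. Your side remark that the one-sided interval $[0,M]$ would yield the sharper requirement $k\geq\frac{M^2\ln(2/\delta)}{2\eps^2}$ is a valid observation the paper does not make, but it does not change the argument.
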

%\vspace{-3ex}

This claim states that, with high probability, the average loss on the set $P$ over the i.i.d sampled set $Q$ of $k$ queries approximates the expected loss on the set $P$ over all queries in $Q'$ (i.e., $\mathbb{E}_\mu(f(P,w,q))$). However, the size $k$ of $Q$ should be large enough and proportional to the approximation error $\eps$, the probability of failure $\delta$, and finally, the maximum loss over every $q\in Q'$, i.e., $\sup_{q\in Q'}{f(P,w,q)}$ (see Claim~\ref{hofff}). Now, recall that $\mathbb{P}$ is the ground set, i.e., $P,C \subset \mathbb{P}$, and $M=\sup_{q\in Q',p\in \mathbb{P}}\abs{f(p,q)}$. 
As we formally show in Section~\ref{ProofofClaim1}, since, $\eps$ and $\delta$ are fixed, and since $\sup_{q\in Q'}{f(P,w,q)} = \sup_{q\in Q'}{\sum_{p\in P}w(p)f(p,q)} \leq  \sum_{p\in P}w(p)M =M$, all is needed for Claim~\ref{hofff} to hold, is to sample enough queries (based on the Hoeffding's inequality).

To show that~\ref{secondstep} holds, we can also use the Hoeffding's inequality, but additionally we need to bound $\sup_{q\in Q}f(C,u,q)$. 
This was the reason for  adding the constraint on the sum of weights:
 $\abs{\sum_{p\in P}w(p) - \sum_{p\in C}u(p)}\leq \eps ,$  to obtain $\sup_{q\in Q}f(C,u,q)\leq (1+\eps)M$. Formally,

\newcommand{\ALG}{ALG}

\begin{claim}\label{main:theroem}
Let $(P,w,Q',f,\mu)$ be a measurable query space over $\mathbb{P}$, where $\sum_{p\in P}w(p)=1$, and let $M=\sup_{q\in Q',p\in \mathbb{P}}\abs{f(p,q)}$. 
Let $\eps\in (0,\infty)$ be an approximation error, $\delta \in (0,1)$ be a probability of failure, and let $\csize \geq 1$ be an integer. 
Let $Q$ be a sample of $k \geq \frac{2((1+\eps)M)^2\ln(2/\delta)}{\eps^2}$ queries from $Q'$, chosen i.i.d, where each $q\in Q'$ is sampled with probability $\mu(q)$. 
Let $(C,u)$ be the output of a call to $\computeaca(P,w,Q, f,\csize)$; see Algorithm~\ref{alg:main-new}. If
\begin{enumerate}
    \item $\abs{\sum_{p\in P}w(p) - \sum_{p\in C}u(p)}\leq \eps ,$ and\label{assump1} 
    \item $\abs{\frac{1}{k}\sum_{q\in Q} f(P,w,q) - \frac{1}{k}\sum_{q\in Q} f(C,u,q)} \leq \eps.$ \label{assump}
\end{enumerate}
Then, we obtain that, with probability at least $1-\delta$,
$$\abs{\mathbb{E}_\mu (f(P,w,q)) - \mathbb{E}_\mu (f(C,u,q))} < 3\eps.$$
\end{claim}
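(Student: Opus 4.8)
The plan is to interpolate between $\E_\mu(f(P,w,q))$ and $\E_\mu(f(C,u,q))$ through their empirical averages on the sampled query set $Q$, controlling each resulting term either by the two hypotheses of the claim or by a single application of Hoeffding's inequality. Concretely, I would set $g(q) := f(P,w,q) - f(C,u,q)$ for $q\in Q'$, so that $\E_\mu(g(q)) = \E_\mu(f(P,w,q)) - \E_\mu(f(C,u,q))$ and $\frac{1}{k}\sum_{q\in Q} g(q) = \frac{1}{k}\sum_{q\in Q}f(P,w,q) - \frac{1}{k}\sum_{q\in Q}f(C,u,q)$. The triangle inequality then gives
\[
\abs{\E_\mu(f(P,w,q)) - \E_\mu(f(C,u,q))} \;\le\; \abs[\Big]{\E_\mu(g(q)) - \tfrac{1}{k}\textstyle\sum_{q\in Q} g(q)} \;+\; \abs[\Big]{\tfrac{1}{k}\textstyle\sum_{q\in Q} g(q)},
\]
and the second summand is at most $\eps$ by assumption~\ref{assump}.

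First I would pin down the range of $g$. Since $f$ takes values in $[0,\infty)$ and $\sum_{p\in P}w(p)=1$, we have $0 \le f(P,w,q) \le M$ for every $q\in Q'$. Since the weights $u$ returned by Algorithm~\ref{alg:main-new} are nonnegative (line~\ref{line:updateu-new}) and $f\ge 0$, we also have $f(C,u,q)\ge 0$; and by assumption~\ref{assump1} together with $\sum_{p\in P}w(p)=1$ we get $\sum_{p\in C}u(p)\le 1+\eps$, hence $f(C,u,q) = \sum_{c\in C}u(c)f(c,q) \le (1+\eps)M$. Therefore $g(q)\in[-(1+\eps)M,\,M]\subseteq[-(1+\eps)M,\,(1+\eps)M]$, an interval of length $2(1+\eps)M$. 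This is precisely where the weight-sum constraint is used: it makes $\sup_{q\in Q'}f(C,u,q)$ finite and comparable to $M$, so that the sample size that works for $P$ also works for $C$.

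Next I would apply Hoeffding's inequality to the i.i.d.\ sample $g(q_1),\dots,g(q_k)$: with the stated $k \ge \frac{2((1+\eps)M)^2\ln(2/\delta)}{\eps^2}$, the tail probability is $2\exp\!\bigl(-2k\eps^2/(2(1+\eps)M)^2\bigr) = 2\exp\!\bigl(-k\eps^2/(2(1+\eps)^2M^2)\bigr) \le \delta$, so $\abs{\E_\mu(g(q)) - \frac{1}{k}\sum_{q\in Q} g(q)} \le \eps$ with probability at least $1-\delta$. Substituting this bound and assumption~\ref{assump} into the displayed inequality yields $\abs{\E_\mu(f(P,w,q)) - \E_\mu(f(C,u,q))} \le 2\eps < 3\eps$ (the last step since $\eps>0$), which is the claimed conclusion.

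The argument is mostly routine once the range of $g$ is identified; the one point that needs care is keeping the failure probability at $\delta$ rather than $2\delta$. That is exactly why I would treat $g = f(P,w,\cdot)-f(C,u,\cdot)$ as a \emph{single} bounded random variable and invoke Hoeffding once, instead of separately establishing item~\ref{firststep} (via Claim~\ref{hofff}) and item~\ref{secondstep} and union-bounding them; the conceptual three-term decomposition behind items~\ref{firststep}--\ref{secondstep} is the same, but the combined form avoids the extra $\delta$ and matches the sample size stated in the claim.
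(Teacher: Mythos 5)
Your proof is correct and reaches the stated conclusion, but it takes a genuinely different route from the paper's. The paper (Section~\ref{ProofofClaim2}) derives the same two bounds you do, $\sup_{q\in Q'}f(P,w,q)\le M$ and $\sup_{q\in Q'}f(C,u,q)\le(1+\eps)M$ via assumption~\ref{assump1}, but then invokes Claim~\ref{hofff} (Hoeffding) \emph{twice} --- once for the query space $(P,w,Q',f,\mu)$ and once for $(C,u,Q',f,\mu)$ --- and chains a three-term triangle inequality (empirical mean of $P$, assumption~\ref{assump}, empirical mean of $C$) to obtain $3\eps$. You instead apply Hoeffding once to the single variable $g(q)=f(P,w,q)-f(C,u,q)$, whose range $[-(1+\eps)M,\,M]\subseteq[-(1+\eps)M,\,(1+\eps)M]$ you control by exactly the same weight-sum argument, and use a two-term decomposition. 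Your version buys two things: the final bound is $2\eps<3\eps$, i.e.\ strictly tighter, and the failure probability is genuinely $\delta$, whereas the paper's two separate applications of Claim~\ref{hofff} each fail with probability $\delta$ and would, strictly speaking, require a union bound (yielding $1-2\delta$) or $\delta/2$ per event --- your single application quietly repairs that blemish, as you note. The paper's version buys modularity: it reuses Claim~\ref{hofff} as a black box and directly instantiates the two sufficient conditions listed at the start of Section~\ref{sec:generalization}. One caveat applies equally to both arguments: $(C,u)$ is the output of $\computeaca$ run on $Q$, so $f(C,u,\cdot)$ (and hence your $g$) is data-dependent, and a textbook application of Hoeffding to i.i.d.\ evaluations of a \emph{fixed} bounded function is not literally licensed; since the paper's own proof makes the identical move, this is a shared limitation rather than a gap in your proposal.
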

\begin{proof}
See proof in Section~\ref{ProofofClaim2} in the appendix.
\end{proof}

\subsection{Bridging the Gap Between Theory and Practice}

We take one more step towards deriving effective, practical coresets and replace the loss in Eq.~\ref{alg:loss} (and Line~\ref{line:apperror-new} in Algorithm~\ref{alg:main-new}) with a formulation that is more similar to the standard coreset definition, namely,
$loss(q;C,u)=\abs{1 - \frac{ f(C,u,q)}{f(P,w,q)}} + \lambda \abs{\sum_{p\in P}w(p) - \sum_{p\in C} u(p)}$
and we minimize this loss on average over the training set of queries $Q$; See Algorithm~\ref{alg:main} in the appendix.

A solution obtained by Algorithm~\ref{alg:main} aims to minimize the average approximation error over every query $q$ in the sampled set $Q$ and thus is very similar to the Definition~\ref{def:strongCoreset} with the modification of average instead of the worst case. This  enables us to obtain a better coreset in practice that approximates the loss of every query $q$ (as the minimization is on the average approximation error over all queries and not only on the difference between the average losses of the coreset and the original data over all queries). 
Our empirical evaluation in Section~\ref{sec:exp} verifies that the coreset obtained by running Algorithm~\ref{alg:main} generalizes to unseen queries, i.e., the average approximation error of the coreset over all queries is small compared to other coreset construction algorithms. Moreover, we show below that the solution obtained by Algorithm~\ref{alg:main} satisfies Definition~\ref{def:aca}.

%The goal of the training procedure in Algorithm~\ref{alg:main} is to obtain a solution $(C^*,u^*)$ that minimizes 
%$\sum_{q\in Q} \abs{1-\frac{f(C,u,q)}{f(P,w,q)}} + \lambda \abs{\sum_{p\in P}w(p) - \sum_{p\in C} u(p)}$. %Considering both parts of our loss function, and the fact that minimizing a sum of losses is equivalent to minimizing the average of losses, we get that the solution $(C^*,u^*)$ aims at optimizing the first part of the loss (i) $\frac{1}{k}\sum_{q\in Q} \abs{1-\frac{f(C,u,q)}{f(P,w,q)}}$, while simultaneously considering the minimization (ii) $\abs{\sum_{p\in P}w(p) - \sum_{p\in C} u(p)}$. In other words, we aim to compute the coreset $C^*$ and its weights $u^8$ that are ``good'' for both parts of the loss function.
Let $(C^*,u^*)$ be a solution that minimizes the average loss in Algorithm~\ref{alg:main}. 
We can find a constant $\eps'>0$, such that 
\begin{align}
\frac{1}{k}\sum_{q\in Q} \abs{1-\frac{f(C^*,u^*,q)}{f(P,w,q)}}\leq \eps'.  \label{ALG:guarantee_new}  
\end{align}
For a constant $\eps$ from Definition~\ref{def:aca}, let $M=\sup_{q\in Q}|f(P,w,q)|$, and let $\eps = \eps'M $. By simple derivations (see Section~\ref{proofeq5} in the appendix) we can show that 

\begin{equation} \label{eq:M}
\begin{split}
&\abs{\frac{1}{k}\sum_{q\in Q}f(P,w,q)-\frac{1}{k}\sum_{q\in Q} f(C^*,u^*,q)} 
\leq \eps.
\end{split}
\end{equation}
Hence by Claim~\ref{main:theroem} the solution obtained by Algorithm~\ref{alg:main} generalizes to the whole measurable query space $(P,w,Q',f,\mu)$ and thus it satisfies the definition of  $(\eps,\mu)$-coreset, while simultaneously satisfying Eq.~\eqref{ALG:guarantee_new} which is closely related to the original definition of coresets as in Definition~\ref{def:strongCoreset}.

%TODO: how about adding the following (indeed with better writing): 
%Observe that in the context of deep neural network, in most cases and especially in classification tasks, the used loss function is (softmax) cross-entropy. In such cases, the loss function is always bounded between $0$ and $1$, i.e., $0\leq f(p,q)\leq1$ for every $p\in P$ and $q\in Q$, which implies that the defined parameter $M$ in equation~\ref{eq:M} is actually $1$, and thus $\eps=\eps'$ is this case. Furthermore in many machine learning known problem such assumption indeed holds, e.g., Logistic resgression. Finally, when we aim at solving (or suggesting an approximation to) some problem by computing a coreset, the queries that we wish to approximate are close to the optimum which means that there loss is indeed small and bounded.

\section{Experimental Results}\label{sec:exp}
We proposed a unified framework for coreset construction that allows us to use the same algorithm for different problems. We demonstrate this on the examples of training set reduction for linear and logistic regression in Section~\ref{sec:exp_data_reduction} and on the examples of model size reduction a.k.a. model compression of MLP and CNN in Section~\ref{sec:exp_model_compr}. We show that in both cases our unified framework yields comparable or even better results than previous coresets,  which are specifically fitted to the problem at hand. 

\begin{figure*}
     \centering
     \includegraphics[width=0.3\textwidth,height=0.2\textwidth]{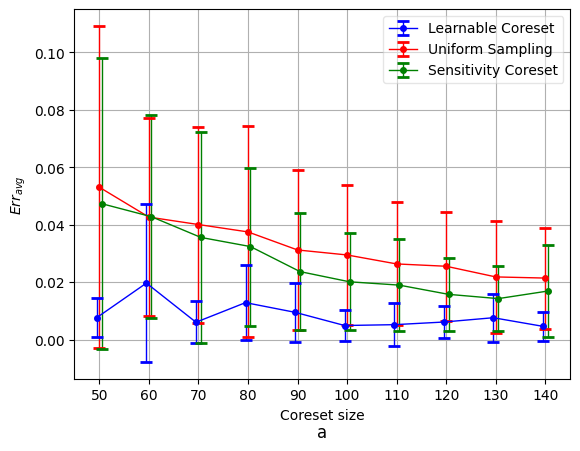}
     \includegraphics[width=0.48\textwidth,height=0.2\textwidth]{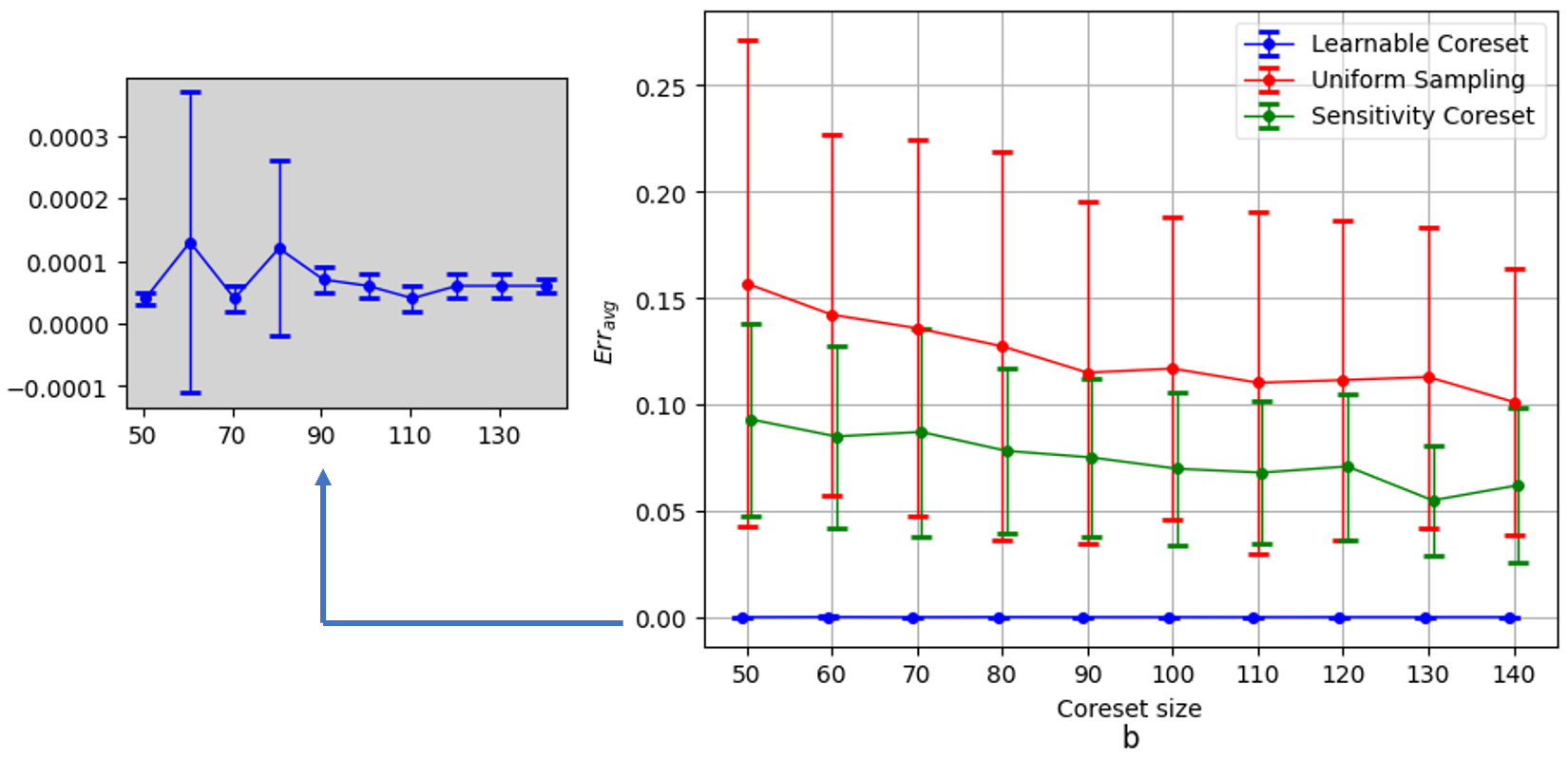}
      \caption{\small Linear regression: a -- Approximation error for the optimal solution as a function of the coreset's size; b -- average approximation error on the unseen test data as a function of the coreset's size.} \label{fig:linearresst}
\end{figure*}

\begin{figure*}
         \centering
         \includegraphics[width=0.3\textwidth,height=0.2\textwidth]{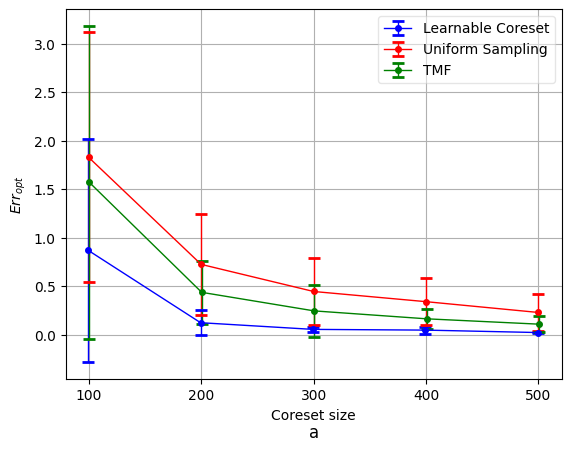}
         \includegraphics[width=0.48\textwidth,height=0.2\textwidth]{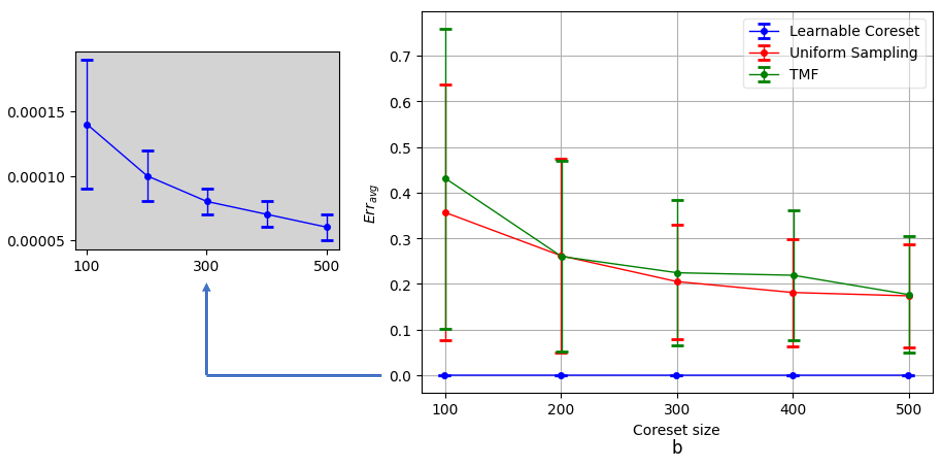}
         \caption{\small Logistic regression: a -- Approximation error for the optimal solution as a function of the coreset's size; b -- average approximation error on the unseen test data as a function of the coreset's size.}
         \label{fig:logisticresst}
\end{figure*}

\subsection{Training Data Coresets}\label{sec:exp_data_reduction}
We demonstrate the practical strength of our coreset construction scheme in the context of data reduction for linear  and logistic regression. 

\subsubsection{Setup} For linear regression, we ran our experiments on the 3D Road Networks dataset\footnote{\url{https://archive.ics.uci.edu/ml/datasets/3D+Road+Network+(North+Jutland,+Denmark)}} (North Jutland, Denmark)~\cite{kaul2013building} that contains 434,874 records. We used  two attributes: ``Longitude'' [Double] and ``Latitude'' [Double] to predict the third attribute ``Height in meters'' [Double]. We created a set of queries by sampling models from training trajectories of linear regression computed using the full data set from 20 random starting points. We split the sampled models into training, validation and tests sets of sizes 20,000 ($|Q|$=20,000), 2,000, 2,000 correspondingly. 
We computed weighted coresets of different sizes, from 50 to 140. For each coreset size, we invoked Algorithm~\ref{alg:main} with Adam optimizer~\cite{kingma2014adam} for 10 epochs with a batch size of 25 and learning rate of 0.01. The results were averaged across $10$ trials. In this experiments we used $\lambda = 1$.

For the logistic regression we performed the experiments on HTRU~\footnote{\url{https://archive.ics.uci.edu/ml/datasets/HTRU2}} dataset, comprising 17,898 radio emissions of the pulsar star represented by 8 features and a binary label~\cite{lyon2016fifty}.  We created a set of queries  similarly to linear regression and we sampled from this set training, validation and test sets of sizes 8,000, 1,600, 800 correspondingly. The results were averaged across $5$ trials.

To make the optimization simpler, we removed the weight fitting term from the loss in Algorithm~\ref{alg:main} and assumed that all members of the coreset have the same weight  $1/|C|$. We ran the optimization for $1000$ epochs with the batch size of $100$ using Adam optimizer and learning rate of $0.001$. Using this modification, we computed coresets of different sizes randing from 100 to 500.

The differences in hyper-parameters and the coreset sizes between the logistic and linear regression experiments are due to the higher complexity of the problem for logistic regression. First, computing a coreset for logistic regression is known to be a complex problem where (high) lower bounds on the coreset size exists~\cite{munteanu2018coresets}. The second (and probably less significant) reason is the dimension of the input data, where we used a higher dimensional input in logistic regression.

\subsubsection{Results} 
We refer to a weighted labeled input dataset by $(P,w,b)$, where $P$ is the dataset, $b:P\to \REAL$ and $w:P\to[0,\infty)$ are the labeling function and weight function respectively, i.e., each point $p$ in $P$ is a sample in the dataset, $b(p)$ is its corresponding label and, $w(p)$ is its weight.
Similarly, we refer to the compressed labeled data set (coreset) by $(C,u,y)$. We report the results using two measures as explained below.
\begin{enumerate}
    \item \textbf{Approximation error for the optimal solution.} Let $q^*$ be the query that minimizes the corresponding objective loss function, e.g., in linear regression: $q^*\in \argmina_{q\in \REAL^{d}} f(P,w,b,q)$, where $f(P,w,b,q) =   \sum_{p\in P}w(p)({p^Tq-b(p)})^2$.
    For each coreset $(C,u,y)$, we compute $q^*_{c}\in \argmina_{q\in \REAL^{d}} f(C,u,y,q)$, then we calculate the approximation error for the optimal solution as 
    $Err_{opt}=\abs{1-\frac{f(P,w,b,q^*_{c})}{f(P,w,b,q^*)}}.$
    
    \item \textbf{Average approximation error.} For every coreset $(C,u,y)$, we report the average case approximation error over every query $q$ in the test set $Q_{test}$, i.e., $Err_{avg}=\frac{1}{|Q_{test}|}\sum_{q\in Q_{test} }\abs{1-\frac{f(C,u,y,q)}{f(P,w,b,q)}}.$
\end{enumerate}

We compare our coresets for linear regression with uniform sampling and with the coreset from~\cite{maalouf2020tight}; $Err_{opt}$ of the three methods is shown in Figure~\ref{fig:linearresst}(a) and $Err_{avg}$ in Figure~\ref{fig:linearresst}(b). 
We compare our coreset for logistic regression with uniform sampling and with the coreset from~\cite{tukan2020coresets}; $Err_{opt}$ of the compared methods is shown in Figure~\ref{fig:logisticresst}(a) and $Err_{avg}$ in Figure~\ref{fig:logisticresst}(b). 

In both experiments we observe that our learned coresets outperform the uniform sampling, and the theoretical counterparts.  Our method yields very low average approximation error, because it was explicitly trained to derive a coreset that minimizes the average approximation error  on the training set of queries, and the learned coreset succeeded to generalize to unseen queries.

\subsection{Model Coreset for Structured Pruning}\label{sec:exp_model_compr}
The goal of model compression is reducing the run time and  the memory requirements during inference with no or little accuracy loss compared to the original model. 
Structured pruning reduces the size of a large trained deep network by reducing the width of the layers (pruning neurons in fully-connected layers and filters in convolutional layers).  An alternative approach is sparsification, which zeros out unimportant parameters in a deep network. The main drawback of sparsification is that it leads to an irregular network structure, which needs a special treatment to deal with sparse representations, making it hard to achieve actual computational savings. Structured pruning simply reduces the size of the tensors, which allows running the resulting network without any amendment. 
Due to the advantage of structured pruning over sparsification, we perform structured pruning of a  deep networks in our experiments.

We assume that the target small architecture is given, and our task is to compute the training parameters of the small architecture that best approximate the original large network. 
We view filters in CNN  or neurons in a fully connected network as items in the full set $P$, and the training data as the query set $Q$. We use the small architecture to define the coreset size in each layer and we learn an equally weighted coreset $C$ (the small network) using Algorithm~\ref{alg:main} and setting $\lambda=0$. 
We report the experiments for structured pruning of a fully connected network in Section~\ref{subsec:neuralpruning} and of channel pruning in Section~\ref{subsec:chanpruning}. %We compare our results with previous  methods using two popular models: LeNet-$300$-$100$ for MNIST~\cite{lecun1998gradient}, and VGG-$19$~\cite{simonyan2014very} for CIFAR10~\cite{krizhevsky2009learning}.
%\vspace{-2ex}
\subsubsection{Neuron Pruning}\label{subsec:neuralpruning}
\textbf{Setup.} We used LeNet-$300$-$100$ model with 266,610 parameters trained on MNIST~\cite{lecun1998gradient} as our baseline fully-connected model. It comprises two fully connected hidden layers with $300$ and $100$ neurons correspondingly, each followed with a ReLu activation.  After training the baseline model with Adam optimizer for $40$ epochs and batch size of $64$, it  achieved test accuracy of $97.93\%$ and loss = $0.0917$. 
The target small architecture included $30$ neurons in the first layer and $100$ in the second, resulting in  $89.63\%$ compression ratio. We applied the training procedure in Algorithm~\ref{alg:main} to learn the weights of this network using Adam optimizer with $L_2$ regularization for $400$ epochs with the batch size of $500$. 

\noindent\textbf{Results.} The coreset (compressed) model achieved $97.97\%$ accuracy and $0.0911$ loss on the test data, i.e., improvement in both terms. Next, we compare our results to a pair of other coreset-based compression methods in Table~\ref{table:comparison-lenet}, and to non-coreset methods: Filter Thresholding (FT)~\cite{li2016pruning}, SoftNet~\cite{he2018soft}, and ThiNet~\cite{luo2017thinet} implemented in~\cite{Liebenwein2020Provable}. We observe that the learned coreset performs better than most compared methods and comparably to the algorithm derived from the theoretical coreset framework. Note that previous coreset methods~\cite{MussayOBZF20,Liebenwein2020Provable} are designed for a single layer, while our algorithm does not have this limitation and can be applied to compress all layers of the network in a single run. 
Moreover applied to DNN compression, our framework can work on individual weights (sparcification), neurons (as shown above) and channels (as we show next).
%\vspace{-2ex}
\subsubsection{Channel Pruning}\label{subsec:chanpruning}
\textbf{Setup.} We used Pytorch implementation of VGGNet-19 network \footnote{\href{https://github.com/Eric-mingjie/network-slimming/blob/master/models/vgg.py}{VGG-code-link}}
for CIFAR10 from~\cite{liu2017learning} with about 20M parameters as our baseline CNN model (see Table~\ref{table:vggarch} for more details). 
The baseline accuracy and loss in our experiments was  $93.25\%$ and $0.3387$ correspondingly. %~\tablefootnote{https://github.com/foolwood/pytorch-slimming}.
The target architecture\footnote{https://github.com/foolwood/pytorch-slimming} of the small network (see Table~\ref{table:vggarch})
corresponds to 70\% compression ratio and to the reduction of the parameters by roughly 88\%. We ran Algrothm~\ref{alg:main} using the small architecture to define the size of each layer for $180$ epochs with batch size of $500$ using Adam optimizer and $L_2$ regularization. 

\noindent\textbf{Results.} Our compressed model improved the baseline network and achieved $93.51\%$ accuracy and $0.32$ loss. 
Table~\ref{table:comparison-vgg} compares the small network accuracy of the learned coreset with the channel pruning coreset from~\cite{Mussai20a} and several non-coreset methods. While the results are comparable, our algorithm is much simpler and is not tailored to the problem at hand. The coreset reported in~\cite{Mussai20a} was constructed by applying a channel pruning coreset in a layer by layer fashion, while our learned coreset is computed in one-shot for the entire network. Finally, we remind the reader that our framework is generic and could be applied to many other applications in addition to compressing DNNs.

\begin{table}[!h]
%\begin{minipage}{\textwidth}
\begin{center}
%\begin{adjustbox}{width=0.49\textwidth}
 \begin{tabular}{| c || c | c |} 
 \hline
 Layer & {Width (original)} & {Width (compressed)} \\
 \hline
 \hline
 1 & 64 & 49 \\
 \hline
 2 & 64 & 64 \\
 \hline
 3 & 128 & 128\\ 
 \hline
 4 & 128 & 128\\ 
 \hline
 5 & 256 & 256\\ 
 \hline
 6 & 256 & 254\\ 
 \hline 
 7 & 256 & 234\\ 
 \hline
 8 & 256 & 198\\ 
 \hline
 9 & 512 & 114 \\ 
 \hline
 10 & 512 & 41\\ 
 \hline
 11 & 512 & 24\\ 
 \hline
 12 & 512 & 11\\ 
 \hline
 13 & 512 & 14\\ 
 \hline
 14 & 512 & 13\\ 
 \hline
 15 & 512 & 19\\ 
 \hline
 16 & 512 & 104\\ 
 \hline
\end{tabular}
%\end{adjustbox}
\end{center}
\caption{VGG-19 original and compressed architectures.} \label{table:vggarch}
%\end{minipage}
\end{table}

\begin{table}[!h]
\centering
\begin{adjustbox}{width=1\columnwidth}
\begin{tabular}{|l|ccc|}
\hline
Pruning Method & Baseline & Small Model& Compression \\
& Error(\%)&Error(\%)&Ratio \\\hline \hline
FT\cite{li2016pruning} & 1.59 & +0.35 & 81.68\%\\ \hline
SoftNet~\cite{he2018soft}& 1.59& +0.41& 81.69\% \\ \hline
ThiNet~\cite{luo2017thinet}&1.59& +10.58& 75.01\% \\ \hline
Sample-based& & &\\
Coreset~\cite{Liebenwein2020Provable}&  1.59&+0.41&84.32\% \\ \hline
Pruning& & & \\
via Coresets~\cite{Mussai20a} &2.16 &-0.13& $\sim 90$\%\\ \hline
Learned Coreset (ours)& 2.07 &-0.04 &89.63\%\\
\hline
\end{tabular}
\end{adjustbox}
%\vspace{-2ex}
\caption{\small Neural Pruning of LeNet-300-100 for MNIST. The results of FT, SoftNet, ThiNet and Sample-Based Coreset are reported in~\cite{Liebenwein2020Provable}. `+' and `-' correspond to increase  and decrease in error, respectively.}
\label{table:comparison-lenet}
\end{table}

\begin{table}[!h]
\centering
\begin{adjustbox}{width=1\columnwidth}
\begin{tabular}{|l|ccc|}
\hline
Pruning Method & Baseline & Small Model& Compression \\
& Error(\%)&Error(\%)&Ratio \\\hline \hline
Unstructured
Pruning~\cite{han2015learning}& 6.5 & -0.02 & 80\%\\ \hline
Structured 
Pruning~\cite{liu2017learning}& 6.33 & -0.13 & 70\% \\ \hline
Pruning
via Coresets~\cite{Mussai20a} &6.33 &-0.29 & 70\%\\ \hline
Learned Coreset (ours) & 6.75&-0.26&70\%\\
\hline
\end{tabular}
\end{adjustbox}
%\vspace{-2ex}
\caption{\small Channel Pruning of VGG-19 for CIFAR-10}
\label{table:comparison-vgg}
\end{table}
%\vspace{-5.5ex}
\section{Conclusions}
We proposed a novel unified framework for coreset learning that is theoretically motivated and can address problems for which obtaining theoretical worst-case guarantees is impossible. Following this framework, we suggested a relaxation of the coreset definition from the worst case to the average loss approximation. 
We proposed a learning algorithm that inputs a sample set of queries and a loss function associated with the problem at hand and outputs an average-loss coreset that holds for the training set of queries and  generalizes to unseen queries. 
We showed that if the sample set of queries is sufficiently large, then the average loss over the coreset closely approximates the average loss over the full set for the entire query space. 
We then showed empirically, that our learned coresets are capable to generalize to unseen queries even for arbitrary sampling sizes. Our experiments demonstrated that coresets learned by our new approach yielded comparable and even better approximation of the optimal solution loss and average loss over the unseen queries than coresets that have worst-case guarantees. Moreover, our method applied to the problem of deep networks pruning provides the first full-network coreset with excellent performance.
In future work we will try reducing the sampling bound and will apply the proposed framework to derive new coresets.

\bibliographystyle{plain}
\bibliography{main}

\appendices

\section{Hoeffding Theorem}
\begin{theorem}[Hoeffding]\label{thm:hof}
Let $X_1,\cdots,X_k$ be independent random variables, when it is known that for every $i\in k$, $X_i$ is strictly bounded by the intervals $[a_i, b_i]$. Define the empirical mean of these variables by
$\mu = \sum_{i=1}^k \frac{1}{k} X_i$, then
$$\Pr(\abs{\mu - E(\mu)} \geq \eps) \leq 2 e^{-\frac{2k^2\eps^2}{\sum_{i=1}^{k}(a_i-b_i)^2}}$$
\end{theorem}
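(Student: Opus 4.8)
The plan is to establish the inequality by the classical Chernoff--Cram\'er (exponential moment) method, proving each one-sided tail separately and then combining them by a union bound. First I would center the variables: set $Y_i = X_i - \E(X_i)$, so that $\E(Y_i)=0$ and each $Y_i$ lies in an interval of the same length $b_i-a_i$. Writing $S = \sum_{i=1}^k Y_i = k(\mu - \E(\mu))$, the event $\mu - \E(\mu) \ge \eps$ coincides with $S \ge k\eps$. For any $s>0$, Markov's inequality applied to the nonnegative random variable $e^{sS}$ gives
\[
\Pr(S \ge k\eps) \le e^{-sk\eps}\,\E\left(e^{sS}\right),
\]
and by independence of the $X_i$ (hence of the $Y_i$) the moment generating function factorizes as $\E(e^{sS}) = \prod_{i=1}^k \E(e^{sY_i})$.

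The technical heart of the argument --- and the step I expect to be the main obstacle --- is Hoeffding's lemma, which bounds the moment generating function of a single bounded, mean-zero variable: for $Y_i$ taking values in $[\alpha_i,\beta_i]$ with $\E(Y_i)=0$ and $\beta_i-\alpha_i = b_i-a_i$, one has $\E(e^{sY_i}) \le \exp\!\big(s^2(b_i-a_i)^2/8\big)$. I would prove this by using convexity of $t\mapsto e^{st}$ to dominate $e^{sy}$ on $[\alpha_i,\beta_i]$ by the chord through its endpoints, take expectations, and then study the log-moment generating function $\psi(s) = \log \E(e^{sY_i})$. A second-order Taylor expansion of $\psi$ around $s=0$ (where $\psi(0)=0$ and $\psi'(0)=\E(Y_i)=0$), together with the observation that $\psi''(s)$ equals the variance of $Y_i$ under the exponentially tilted law supported on $[\alpha_i,\beta_i]$ --- which is at most $(b_i-a_i)^2/4$ by Popoviciu's inequality --- yields $\psi(s) \le s^2(b_i-a_i)^2/8$.

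Combining the factorization with the lemma gives
\[
\Pr(S\ge k\eps) \le \exp\!\Big(-sk\eps + \tfrac{s^2}{8}\sum_{i=1}^k (b_i-a_i)^2\Big).
\]
I would then optimize the exponent over $s>0$, which is a quadratic minimized at $s^{*} = 4k\eps / \sum_{i=1}^k (b_i-a_i)^2$, producing the one-sided bound $\exp\!\big(-2k^2\eps^2/\sum_{i=1}^k(b_i-a_i)^2\big)$. Finally, applying the identical argument to the variables $-Y_i$ bounds the lower tail $\Pr(S \le -k\eps)$ by the same quantity; a union bound over the two symmetric events, together with the identity $(a_i-b_i)^2=(b_i-a_i)^2$, introduces the factor of $2$ and completes the proof.
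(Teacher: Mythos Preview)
Your argument is the standard and correct Chernoff--Cram\'er proof of Hoeffding's inequality: centering, the exponential Markov bound, factorization via independence, Hoeffding's lemma for each factor, optimization over $s$, and a union bound for the two tails. There is no gap.

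However, there is nothing to compare against: the paper does not prove Theorem~\ref{thm:hof}. It is recorded in the appendix purely as a quoted classical result and then invoked as a black box in the proofs of Claim~\ref{hofff} and Claim~\ref{main:theroem}. So your proposal is not an alternative route but rather a full proof where the paper simply cites the theorem.
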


\section{Proof of Claim~\ref{hofff}} \label{ProofofClaim1}
\begin{proof}
First, observe that: (i) the probability distribution $\mu$ is defined over the set $Q'$, and (ii) the function $f$ in our case is a function of $q \in Q'$, since $P$ and $w$ are fixed (given). Thus, we can define the corresponding probability distribution $\mu'$ for the (multi)-set $F'=\br{f(P,w,q) \mid q\in Q'}$ as follows: For every $x=f(P,w,q) \in F'$ (where $q\in Q'$) we have that $\mu'(x) = \mu'(f(P,w,q)) = \mu (q)$. 

Moreover, the sampled set $Q$ has its corresponding sampled losses set $F=\br{f(P,w,q) \mid q\in Q}$. Hence, we have that
\begin{align}
&\Pr\left(\abs{\frac{1}{k}\sum_{q\in Q} f(P,w,q) - \mathbb{E}_\mu (f(P,w,q))} < \eps\right) 
\\&=1 - \Pr\left(\abs{\frac{1}{k}\sum_{q\in Q} f(P,w,q) - \mathbb{E}_\mu (f(P,w,q))} \geq \eps\right).  \label{step:1}
\end{align} 
By applying Hoeffding’s inequality (see Theorem~\ref{thm:hof} in the appendix) we have: 
\begin{equation}
\begin{split}
&1 - \Pr\left(\abs{\frac{1}{k}\sum_{q\in Q} f(P,w,q) - \mathbb{E}_\mu (f(P,w,q))} \geq \eps\right)
\\ &\geq 1 - 2e^{-2 \frac{|F|^2 \cdot  {\eps}^2} {\sum_{i = 1}^{|F|} (a_i -b_i)^2}},\label{step:2}
\end{split}
\end{equation}
where $a_i$ and $b_i$ are the lower and upper bounds on the loss of the $i$th sampled query respectively. %I.e., by letting $q_i$ be the $i$th sampled query, we get $a_i\leq f(P,w,q_i)\leq b_i$.

Since, by the definition of $M$ we have that for every $q\in Q'$, $-M\leq f(P,w,q)\leq M$, we obtain that,

\begin{equation}
\begin{split}
1 - 2e^{-2 \frac{|F|^2 \cdot  {\eps}^2} {\sum_{i = 1}^{|F|} (a_i -b_i)^2}}
&= 1 - 2e^{-2 \frac{|F|^2 \cdot  {\eps}^2} {\sum_{i = 1}^{|F|} (M +M)^2}}
\\ &= 1 - 2e^{-2 \frac{|F|^2 \cdot  {\eps}^2} {|F|(2M)^2}} 
=1 - 2e^{-2 |F| \cdot  {\eps}^2/4M^2}.\label{plug:a_ib_i}
\end{split}
\end{equation}

Pluging $|F|=|Q| \geq \frac{2M^2\ln(2/\delta)}{\eps^2}$ in~\eqref{plug:a_ib_i} yields
\begin{equation}
\begin{split}
&1 - 2e^{-2 |F| \cdot  {\eps}^2/4M^2} 
\geq 1 - 2e^{-2 \frac{2M^2\ln(2/\delta)}{\eps^24M^2} \cdot  {\eps}^2} 
\\&= 1 - 2e^{ -{\ln(2/\delta)}} = 
1 - 2\cdot \frac{\delta}{2} = 1-\delta.\label{step:3}
\end{split}
\end{equation}
Finally, combining~\eqref{step:1},~\eqref{step:2}, and~\eqref{step:3} proves the claim as
$$ \Pr\left(\abs{\frac{1}{k}\sum_{q\in Q} f(P,w,q) - \mathbb{E}_\mu (f(P,w,q))} < \eps\right)  \geq  1-\delta.$$

\end{proof}

\section{Proof of Claim~\ref{main:theroem}} \label{ProofofClaim2}
\begin{proof}
Let $M_1= \sup_{q\in Q'}f(P,w,q)$, and let $M_2 = \sup_{q\in Q'}f(C,u,q)$. First we observe that, 
\begin{equation}\label{M1bound}
\begin{split}
M_1&= \sup_{q\in Q'}f(P,w,q) = \sup_{q\in Q'}\sum_{p\in P}w(p) f(p,q) \\ 
&\leq \sum_{p\in P}w(p) M=M, 
\end{split}
\end{equation}
where the third derivation holds by the definition of $M$, and the fourth holds since $\sum_{p\in P}w(p)=1$. 
We also have,
\begin{equation}\label{M2bound}
\begin{split}
M_2 &= \sup_{q\in Q'}f(C,u,q) = \sup_{q\in Q'}\sum_{p\in C}u(p) f(p,q) \\
&\leq \sum_{p\in C}u(p) M \leq (\eps + \sum_{p\in P}w(p)) M=(1+\eps)M, 
\end{split}
\end{equation}
where the third inequality holds by the definition of $M$, the fourth by Assumption~\ref{assump1}, and the last holds since $\sum_{p\in P}w(p)=1$. By combining ~\eqref{M1bound} and~\eqref{M2bound} we get that $(1+\eps)M\geq M_1,M_2$.

Now, we note that Claim~\ref{hofff} holds for any measurable query space. Hence, for the pair of measurable query spaces $(P,w,Q',f,\mu)$ and $(C,u,Q',f,\mu)$, if the sampled set $Q\subset Q'$ satisfies that $k=|Q|\geq\frac{2((1+\eps)M)^2\ln(2/\delta)}{\eps^2}$, then by Claim~\ref{hofff} we get that:
\begin{align}
\abs{\frac{1}{k}\sum_{q\in Q} f(P,w,q) - \mathbb{E}_\mu (f(P,w,q))} < \eps ,\label{eq:P} %\mathbb{E}_\mu (f(P,w,q))
\end{align}
and
\begin{align}
\abs{\frac{1}{k}\sum_{q\in Q} f(C,u,q) - \mathbb{E}_\mu (f(C,u,q))} < \eps. \label{eq:C} %\mathbb{E}_\mu (f(C,w,q))
\end{align}

By the triangle inequality we have that
\begin{align}
&\abs{\mathbb{E}_\mu (f(P,w,q)) - \mathbb{E}_\mu (f(C,u,q))}  \nonumber
\\ &\leq \abs{\mathbb{E}_\mu (f(P,w,q))  - \frac{1}{k}\sum_{q\in Q} f(P,w,q)}\label{step:bound1}
\\ &+ \abs{\frac{1}{k}\sum_{q\in Q} f(P,w,q) - \frac{1}{k}\sum_{q\in Q}f(C,u,q)} \label{step:bound2}
\\ &+ \abs{\frac{1}{k}\sum_{q\in Q} f(C,u,q) - \mathbb{E}_\mu(f(C,u,q))}.\label{step:bound3}
\end{align}
By~\eqref{eq:P} and~\eqref{eq:C}, and by the assumption~\eqref{assump} on the output $C,u$, we have that~\eqref{step:bound1},~\eqref{step:bound2}, and~\eqref{step:bound3} are bounded by $\eps$. Hence, 
$$\abs{\mathbb{E}_\mu (f(P,w,q)) - \mathbb{E}_\mu (f(C,u,q))}  \leq 3\eps.$$

\end{proof}

\section{Practical implementation}\label{practicimp}

\newcommand{\computeacaPractical}{\textsc{Practical-AutoCL}}

\begin{algorithm}[htb]
\small
\caption{$\computeacaPractical(P,w,Q,f,\csize)$}
\label{alg:main}
\textbf{Input:} A finite input set $P$ and its weight function $w$, a finite set of queries $Q$, a loss function $f:P\times Q\to [0,\infty)$, and an integer $\csize\geq1$. \\
\begin{spacing}{1.1}
\begin{algorithmic}[1]
\small
\STATE $C:=\br{c_i}_{i=1}^{\csize}$ is an arbitrary set of $\csize$  vectors in $\mathbb{P}$.\label{line:initc}\\  
\STATE $u(c):=1/\csize$ for every $c\in C$.\label{line:initu}
\FOR{$i\in \br{1,\cdots, \epochs}$} 
    %\tcp{the integer $\epochs>0$ is the number of iterations.}
    \FOR{every $q\in Q$}
      \STATE $f_C:= f(C,u,q)$ \label{line:cerror} \\\COMMENT{The cost of the query $q$ on $C$.} 
      \STATE$f_P:= f(P,w,q)$\label{line:perror} \\ \COMMENT{The cost of the query $q$ on $P$.}
      \STATE$loss:= \abs{1-\frac{f_C}{f_P}}  + \lambda\abs{\sum_{p\in P}w(p) - \sum_{p\in C}u(p)}$\label{line:apperror} \\ \COMMENT{The approximation error that we wish to minimize}\\
      \COMMENT{$\alpha$ is the learning rate.}
      \STATE$C:= C - \alpha \nabla loss(C)$\label{line:updatec} \COMMENT{Update $C$} 
      \STATE$u:= \max\{0,u - \alpha \nabla loss(u)\}$ \label{line:updateu}\COMMENT{Update $u$.} 
      \ENDFOR 
\ENDFOR
\STATE \textbf{return} $(C,u)$
\end{algorithmic}
\end{spacing}
\end{algorithm}
%In this section we give the (more) practical implementation of our approach in Algorithm~\ref{alg:main}.  
While the training of Algorithm~\ref{alg:main} is formalized as a stochastic process, i.e., sequentially, for every $q\in Q$, we compute the approximation error $\abs{1 - \frac{ f(C,u,q)}{f(P,w,q)}}$ for this one query $q$, we then update the learned variables based on this error. 
However, it can be implemented using a minibatch of several queries $\tilde{Q}\subseteq Q$. Here, the approximation error with respect to the current batch $\tilde{Q}$ is $\sum_{q\in \tilde{Q}} \abs{1 - \frac{ f(C,u,q)}{f(P,w,q)}}$ and the learned variables are updated based on this error.
\subsection{Proof of Equation~\ref{eq:M}} \label{proofeq5}
For a constant $\eps$ from Definition~\ref{def:aca}, let $M=\sup_{q\in Q}|f(P,w,q)|$, and let $\eps = \eps'M $. We show that
\begin{equation} \label{eq:M1}
\begin{split}
&\abs{\frac{1}{k}\sum_{q\in Q}f(P,w,q)-\frac{1}{k}\sum_{q\in Q} f(C^*,u^*,q)} 
\leq \eps,
\end{split}
\end{equation}
\begin{proof}

\begin{equation} \label{eq:Mappend}
\begin{split}
&\abs{\frac{1}{k}\sum_{q\in Q}f(P,w,q)-\frac{1}{k}\sum_{q\in Q} f(C^*,u^*,q)} 
\\&\leq \frac{1}{k}\sum_{q\in Q}\abs{f(P,w,q) - f(C^*,u^*,q)}
\\ &=  \frac{M}{Mk}\sum_{q\in Q} \abs{f(P,w,q)- f(C^*,u^*,q)}
\\&\leq  \frac{M}{k}\sum_{q\in Q} \frac{1}{|f(P,w,q)|} \abs{\sum_{q\in Q}f(P,w,q) -f(C^*,u^*,q)}
\\&= \frac{M}{k}\sum_{q\in Q} \abs{1-\frac{f(C^*,u^*,q)}{f(P,w,q)}} 
\leq M\eps' =\eps,
\end{split}
\end{equation}
where the first derivation holds since $|\sum_{i=1}^k{a_i}|\leq \sum_{i=1}^k\abs{a_i}$  for any set of number $\br{a_i}_{i=1}^k$, the derivation in (\ref{eq:M}) follows from the definition of $M$, i.e., since $M> |f(P,w,q)|$ for every $q\in Q$, the one after holds since $|a||b|=|ab|$ for any pair $a,b\in \REAL$, and the last derivation holds by~\eqref{ALG:guarantee_new}.
\end{proof}

%Observe that in the context of deep neural networks and especially, in classification tasks, a commonly used loss function is (softmax) cross-entropy. In these cases, the loss function is bounded between $0$ and $1$, i.e., $0\leq f(p,q)\leq1$ for every $p\in P$ and $q\in Q$, which implies that the defined parameter $M$ in Eq.~\ref{eq:M} is actually $1$, and thus $\eps=\eps'$ is this case.

%\ifCLASSOPTIONcompsoc
  % The Computer Society usually uses the plural form
 % \section*{Acknowledgments}
%\else
  % regular IEEE prefers the singular form
 % \section*{Acknowledgment}
%\fi

%The authors would like to thank...

% Can use something like this to put references on a page
% by themselves when using endfloat and the captionsoff option.
\ifCLASSOPTIONcaptionsoff
  \newpage
\fi

\begin{IEEEbiographynophoto}{Alaa Maalouf}
received his B.Sc. and M.Sc. in Computer Science at the University of Haifa, Israel, in 2016 and 2019 respectively, and is now a Ph.D. student under the supervision of Prof. Dan Feldman. His main research interests focus on Machine/Deep Learning, Robotics, Computational Geometry and Coresets (data summarization) for Big Data. %He previously worked at Mellanox technologies as a chip designer for more than 4.5 years, as a teaching assistant for  1.deep learning, 2. coresets and big data 3. Also I'm an advisor at the robotics and big data lab at the University of Haifa.
\end{IEEEbiographynophoto}

\begin{IEEEbiographynophoto}{Gilad Eini}
received his B.Sc. in Computer Science at the University of Haifa, Israel, in 2017, and is on the verge of finishing his  M.Sc. under the supervision of Prof. Dan Feldman. His main research interests focus on Machine/Deep Learning, Computer vision and Coresets (data summarization) for Big Data.
\end{IEEEbiographynophoto}

\begin{IEEEbiographynophoto}{Ben Mussay}
Ben Mussay received the BSc degree and the MSc degree in computer science from the Univeristy of Haifa, Israel,
in 2019 and 2020, respectively. His reseach interests are sublinear algorithms and deep learning.
\end{IEEEbiographynophoto}

\begin{IEEEbiographynophoto}{Dan Feldman}
is an associate professor and the head of the Robotics and Big Data Lab at the University of Haifa, after returning from a 3 years post-doc at at Caltech and MIT.
During his PhD at the University of Tel-Aviv he developed data reduction techniques known as core-sets, based on computational geometry. Since his post-docs, Dan's coresets are applied for main problems in Machine Learning, Big Data, computer vision, EEG and robotics. His group in Haifa continues to design and implement core-sets with provable guarantees for such real-time systems.
\end{IEEEbiographynophoto}

\begin{IEEEbiographynophoto}{Margarita Osadchy}
Margarita Osadchy is an Associate Professor in the Department of Computer Science at the University of Haifa. 
She is a member of the Data Science Research Center and the member of the scientific committee of the
Center for Cyber Law and Policy at the University of Haifa. She received the PhD degree with honors in computer
science from the University of Haifa, Israel. She was a visiting research scientist at the NEC Research
Institute and then a postdoctoral fellow in the Department of Computer Science at the Technion. Her main 
research interests are deep learning, machine learning, computer vision, and computer security and privacy.
\end{IEEEbiographynophoto}

% You can push biographies down or up by placing
% a \vfill before or after them. The appropriate
% use of \vfill depends on what kind of text is
% on the last page and whether or not the columns
% are being equalized.

%\vfill

% Can be used to pull up biographies so that the bottom of the last one
% is flush with the other column.
%\enlargethispage{-5in}

% that's all folks
\end{document}